\newtheorem{theorem}{Theorem}
\newlength\savewidth
\DeclareRobustCommand\onedot{\futurelet\@let@token\@onedot}
\def\@onedot{\ifx\@let@token.\else.\null\fi\xspace}
\def\eg{\emph{e.g}\onedot} 
\def\ie{\emph{i.e}\onedot}
\newcolumntype{S}{>{\centering\arraybackslash}m{0.9cm}}
\newcolumntype{M}{>{\centering\arraybackslash}m{1.2cm}}
\newcolumntype{L}{>{\centering\arraybackslash}m{1.4cm}}
\definecolor{mygray}{gray}{.95}
\definecolor{mylightergray}{gray}{.99}
\definecolor{mygreen}{RGB}{10, 179, 33}
\newcommand{\thickhline}{%
    \noalign {\ifnum 0=`}\fi \hrule height 1pt
    \futurelet \reserved@a \@xhline
}
\newcolumntype{"}{@{\vrule width 1pt}}
\definecolor{iccvblue}{rgb}{0.21,0.49,0.74}
\DeclareRobustCommand\onedot{\futurelet\@let@token\@onedot}
\def\@onedot{\ifx\@let@token.\else.\null\fi\xspace}
\def\eg{\emph{e.g}\onedot} 
\def\ie{\emph{i.e}\onedot}
\newcommand{\stddev}[1]{{\tiny $\pm$#1}}
\acrodef{sota}[SOTA]{State-of-the-Art}
\newcommand{\zu}{$\bm{z}_u$\xspace}
\newcommand{\ze}{$\bm{z}_e$\xspace}
\newcommand{\bmu}{$\bm{u}$\xspace}
\newcommand{\bme}{$\bm{e}$\xspace}
\newcommand{\au}{$\mathcal{U}$\xspace}
\newcommand{\ee}{$\mathcal{E}$\xspace}
\newcommand{\aup}{$\mathcal{U}(\cdot;\theta_u)$\xspace}
\newcommand{\eep}{$\mathcal{E}(\cdot;\theta_e)$\xspace}
\newcommand{\tu}{$\mathcal{T}_u(\cdot;\phi_u)$\xspace}
\newcommand{\te}{$\mathcal{T}_e(\cdot;\phi_e)$\xspace}
\newcommand{\blank}{\rule{0.3cm}{0.25mm}~}
\newcommand{\obs}[2]{
    \begin{tcolorbox}[
        colback=gray!5,            %
        colframe=gray!60!black,    %
        arc=6pt,                   %
        boxrule=1pt,               %
        left=8pt, right=8pt,       %
        top=6pt, bottom=6pt        %
    ]
    \textbf{Observation \##1:} #2
    \end{tcolorbox}
}
\title{Embodied Representation Alignment with Mirror Neurons}
\author{Wentao Zhu\textsuperscript{1,2} \quad Zhining Zhang\textsuperscript{1} 
\quad Yuwei Ren\textsuperscript{3} \quad Yin Huang\textsuperscript{3}
\quad Hao Xu\textsuperscript{3} \quad Yizhou Wang\textsuperscript{1,4} \\[1.2ex]
    \textsuperscript{1~}Center on Frontiers of Computing Studies, School of Compter Science, Peking University \\
    \textsuperscript{2~}Eastern Institute of Technology, Ningbo \quad
    \textsuperscript{3~}Qualcomm AI Research\\
    \textsuperscript{4~}Inst. for Artificial Intelligence, Peking University\\
}
\begin{document}
\maketitle
\vspace{-5mm}

\begin{abstract}
Mirror neurons are a class of neurons that activate both when an individual observes an action and when they perform the same action. This mechanism reveals a fundamental interplay between action understanding and embodied execution, suggesting that these two abilities are inherently connected. Nonetheless, existing machine learning methods largely overlook this interplay, treating these abilities as separate tasks. In this study, we provide a unified perspective in modeling them through the lens of representation learning. We first observe that their intermediate representations spontaneously align. Inspired by mirror neurons, we further introduce an approach that explicitly aligns the representations of observed and executed actions. Specifically, we employ two linear layers to map the representations to a shared latent space, where contrastive learning enforces the alignment of corresponding representations, effectively maximizing their mutual information. Experiments demonstrate that this simple approach fosters mutual synergy between the two tasks, effectively improving representation quality and generalization.
\end{abstract}

\section{Introduction}

\begin{flushright}
    \textit{``The body is our general medium for having a world.''} \\[0.5ex]
    --- Maurice Merleau-Ponty \\[0.5ex]
\end{flushright}

Neuroscience research has uncovered a fascinating mechanism behind multiple cognitive abilities: \emph{mirror neurons}. First identified in macaque monkeys, these neurons fire both when an individual observes an action and when they perform it themselves~\citep{di1992understanding,gallese1996action,rizzolatti2004mirror}. In essence, the neural representations of observed and executed actions are inherently \emph{aligned}. Subsequent studies confirm similar systems in the human brain, where observing others’ actions activates corresponding motor regions, as if the observer were performing them~\citep{fadiga1995motor,keysers2009expanding}. This suggests that action understanding arises from neural simulations of observed behaviors, not merely abstract reasoning~\citep{rizzolatti2008mirrors}. By mapping external movements onto its own motor repertoire, the brain internally ``experiences'' the action, enabling an intuitive grasp of others’ intentions~\citep{jeannerod2001neural}.

\begin{figure}[tb]
    \centering
    \includegraphics[width=\linewidth]{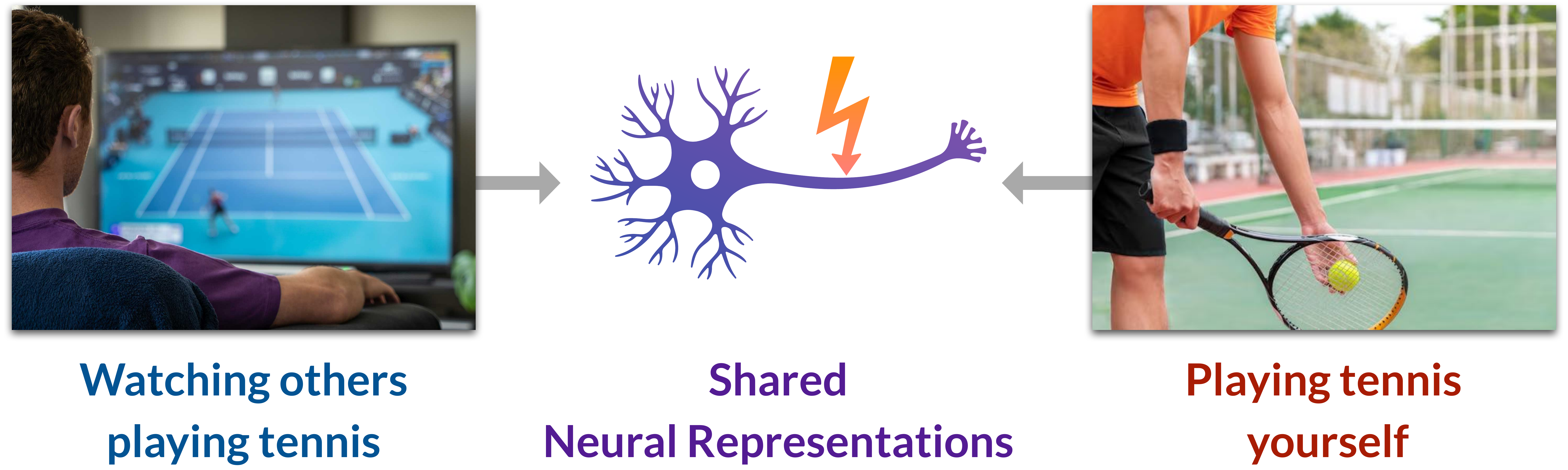}
\caption{\textbf{Conceptual demonstration of mirror neurons.} These neurons activate both when observing an action and when performing it oneself, illustrating the shared neural representations that link perceptual and motor systems.}
    \label{fig:conceptual}
        \vspace{-6mm}
\end{figure}

This neural mechanism highlights the tight bond between two fundamental cognitive abilities: action understanding and embodied execution. Concretely, action understanding enables agents to interpret the meaning and intent behind others' actions, while embodied execution allows them to physically interact with the environment to achieve goals.
Crucially, these two abilities are deeply interconnected: action understanding supports embodied execution by guiding imitation learning and skill acquisition, while in turn, embodied execution provides firsthand sensorimotor experience that refines and deepens action understanding~\cite{erlhagen2006dynamic,loucks2012role}.
The biological foundation of this interplay is exemplified by mirror neurons, which also demonstrates the core idea of \emph{embodied cognition}~\citep{clark1998being,gallese2005brain,barsalou2008grounded,varela2017embodied,sep-embodied-cognition}—the notion that cognitive processes are not merely functions of the brain or abstract activities of the mind, but are deeply rooted in the body’s sensorimotor interactions with the world.
Despite this biological synergy, current machine learning approaches typically address action understanding and embodied execution independently, overlooking their potential to inform and enhance each other~\citep{goyal2023rvt,wang2023internvid,motionbert2022}.
This separation impedes the learning of generalizable and comprehensive action representations, which in turn limits performance on downstream tasks.

In light of this, this work proposes to unify action understanding and embodied execution through the lens of representation learning.
We begin by examining the relationship between the neural representations learned by models for these two tasks. Specifically, we investigate whether the representations align when observing and executing the same actions, how this alignment evolves during training, and its correlation with task success (\cref{sec:probing}).
Building on these insights, we propose a paradigm that jointly trains both models, bridging the gap between action understanding and embodied execution. The core idea is to project the agent’s representations of observed and executed actions into a shared latent space, where contrastive learning enforces their alignment (\cref{sec:aligning}). This approach explicitly aligns the representations of observed and executed actions, which is exactly what biological mirror neurons do. From the viewpoint of information theory, it is equivalent to maximizing the mutual information between the neural representations of the same semantic actions in action understanding and embodied execution.
We evaluate our method on action recognition and multi-task object manipulation benchmarks, demonstrating that the proposed framework enables the two tasks to reinforce each other (\cref{sec:exp}). Our results show that the learned representations are more disentangled and robust, leading to improved generalization. Furthermore, we investigate the impact of different alignment strategies, exploring the appropriate granularity for alignment.
We summarize the key contributions of this paper below:
\begin{itemize}
    \item We conceptualize action understanding and embodied execution as a unified system, grounded in neuroscience and cognitive insights.  
    \item We discover that these two models exhibit representation alignment spontaneously, which correlates to some extent with task success.  
    \item We introduce a representation learning approach inspired by mirror neurons, which directly aligns the neural representations of observing and executing corresponding tasks, thereby effectively maximizing their mutual information.  
    \item Experiments demonstrate that our method is simple and effective, enhancing the generalization and representation learning quality for both tasks.
\end{itemize}

\section{Probing Representation Alignment}
\label{sec:probing}

First, we aim to investigate whether action understanding and embodied execution models, when trained separately, exhibit neural representation alignment similar to mirror neurons. If such alignment exists, we further seek to understand the factors that influence it.

\subsection{Problem Formulation}

Given an  action understanding model \aup and an embodied execution model \eep, we aim to extract their intermediate neural representations and study their alignment. Specifically, the action understanding model $\mathcal{U}$ takes a perceptual sequence $V$, such as a video, and produces an internal representation $\bm{u}$, from which it predicts the semantic label $\hat{y} = \mathcal{U}(V)$.
Meanwhile, the embodied execution model $\mathcal{E}$ takes the environment state $S$ and an instruction $I$ (\eg, a natural language command) as input, generating the representation $\bm{e}$, which is then decoded to the next action $\hat{a} = \mathcal{E}(S, I)$.

The neural representations of observed and executed actions, $\bm{u}$ and $\bm{e}$, originally reside in different high-dimensional representation spaces. In principle, they share some common information, such as the semantic concepts and spatial relationships of the scene. Meanwhile, they would also capture unique aspects specific to the task.
For instance, $\bm{u}$ may emphasize high-level action semantics and visual patterns essential for understanding, whereas $\bm{e}$ is more attuned to physical constraints and feasibility, which are crucial for execution. 

Therefore, to effectively measure their alignment, we aim to learn a pair of linear transformations, \tu and \te, that map $\bm{u}$ and $\bm{e}$ onto a shared latent space $\mathbb{Z}$ and align them. Formally:
\begin{equation}
    \bm{z}_u = \mathcal{T}_u(\bm{u}) \in \mathbb{Z}, \quad 
    \bm{z}_e = \mathcal{T}_e(\bm{e}) \in \mathbb{Z},
\end{equation}
where \zu and \ze denote the aligned action understanding and embodied execution representations, respectively.

\begin{figure}[tb]
    \centering
    \includegraphics[width=\linewidth]{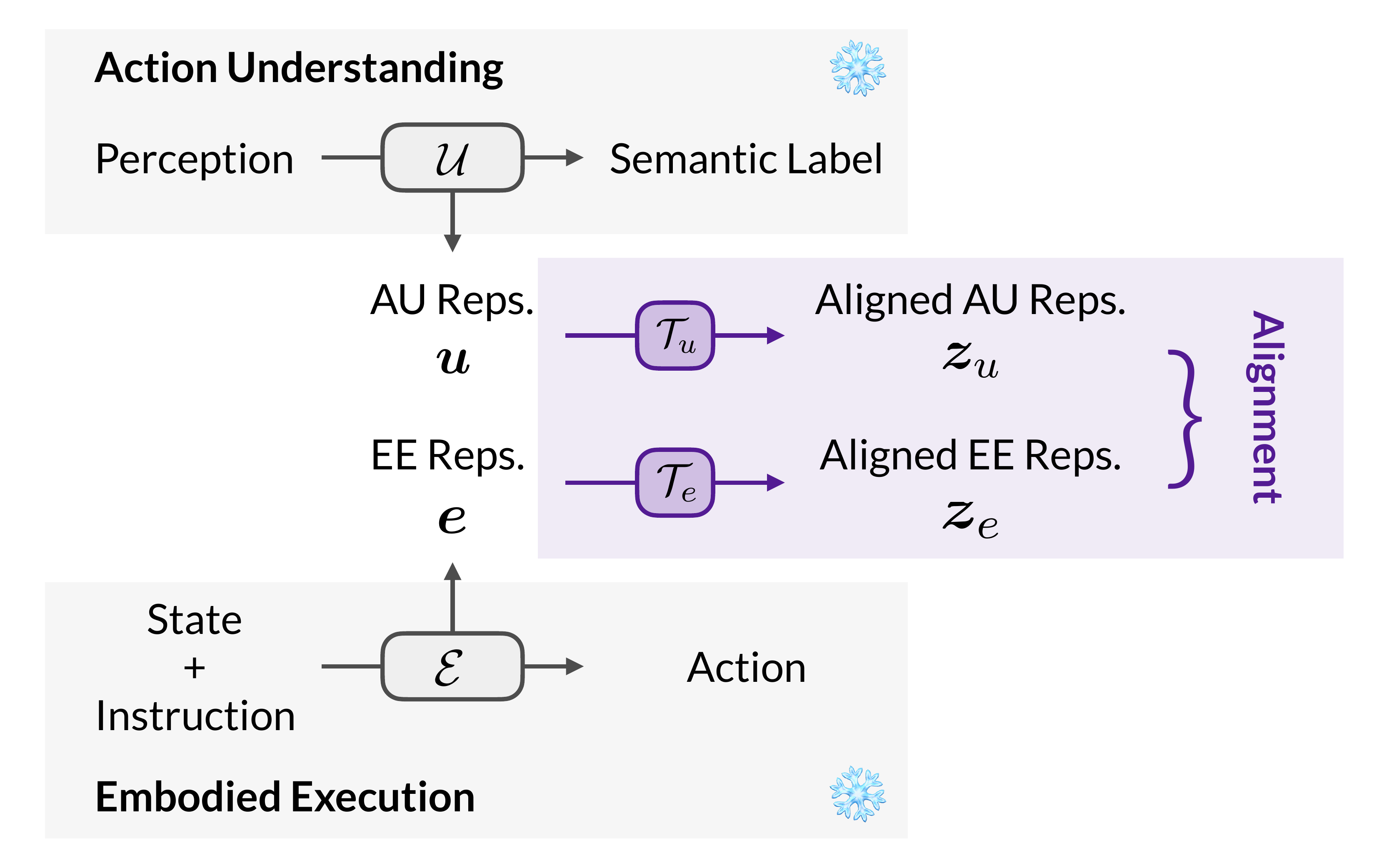}
    \vspace{-4ex}
    \caption{\textbf{Alignment probing of off-the-shelf model representations.} We extract internal neural representations from an Action Understanding (AU) model and an Embodied Execution (EE) model and train two linear transformations, $\mathcal{T}_u$ and $\mathcal{T}_e$, to align them. Both models are pretrained separately and remain frozen during this process.}
    \label{fig:probing}
    \vspace{-4mm}
\end{figure}

\subsection{Probing}

We employ alignment probing~\cite{zhang2024assessing} to align the representations and evaluate the alignment. \Cref{fig:probing} provides an overview of the pipeline.
Given off-the-shelf $\mathcal{U}(\cdot)$ and $\mathcal{E}(\cdot)$, we use contrastive learning to align the corresponding action pairs in the shared latent space $\mathbb{Z}$. Specifically, we optimize the following bidirectional InfoNCE loss:
\begin{equation}
\label{eq:align_loss}
    \begin{split}
        \mathcal{L}_\text{align} = - \frac{1}{2B} \sum_{i=1}^{B} 
        \Bigg[ 
            \log \frac{\exp(\text{sim}(\bm{z}_u^{(i)}, \bm{z}_e^{(i)}) / \tau)}
            {\sum\limits_{j=1}^{B} \exp(\text{sim}(\bm{z}_u^{(i)}, \bm{z}_e^{(j)}) / \tau)} \\
            + \log \frac{\exp(\text{sim}(\bm{z}_u^{(i)}, \bm{z}_e^{(i)}) / \tau)}
            {\sum\limits_{j=1}^{B} \exp(\text{sim}(\bm{z}_e^{(i)}, \bm{z}_u^{(j)}) / \tau)}
        \Bigg].
    \end{split}
\end{equation}
where $B$ is the batch size, and $\tau$ is a temperature scaling parameter. Each pair $(\bm{z}_u^{(i)}, \bm{z}_e^{(i)})$ consists of feature representations derived from observing and executing actions that share the same intent. The similarity function $\text{sim}(\cdot, \cdot)$ is defined as the cosine similarity:
\begin{equation}
    \text{sim}(\bm{z}_u, \bm{z}_e) = \frac{\bm{z}_u^\top \bm{z}_e}{\|\bm{z}_u\| \|\bm{z}_e\|}.
\end{equation}
We optimize the parameters of the linear transformations to minimize the alignment loss:
\begin{equation}
    (\phi_u^*, \phi_e^*) = \arg\min_{\phi_u, \phi_e} \mathcal{L}_\text{align}.
\end{equation}

From an information-theoretic perspective, this optimization objective is equivalent to optimizing a lower bound to estimate the mutual information between the action understanding representation \bmu and the embodied execution representation \bme. The theoretical derivation can be found in~\cref{sec:theoretical} and~\cref{thm_probe}.
In practice, we adopt ViCLIP~\cite{wang2023internvid}, a video-text representation learning model, as the action understanding model \au. We take the output feature of the video encoder as \bmu.  
Furthermore, we utilize ARP~\cite{zhang2024arp}, a pretrained language-conditioned robotic manipulation model, as the embodied execution model \ee. We take the output feature from the last block of the policy network (a chunking causal transformer) as \bme. 
For more implementation details, please refer to~\cref{exp_implementation} and~\cref{supp_implementation}. To measure the degree of alignment, we compute the average Recall@1 of bidirectional nearest neighbor retrieval on a held-out test set after training \tu and \te.

\subsection{Observations}

\begin{figure}[t]
    \centering
    \includegraphics[width=\linewidth]{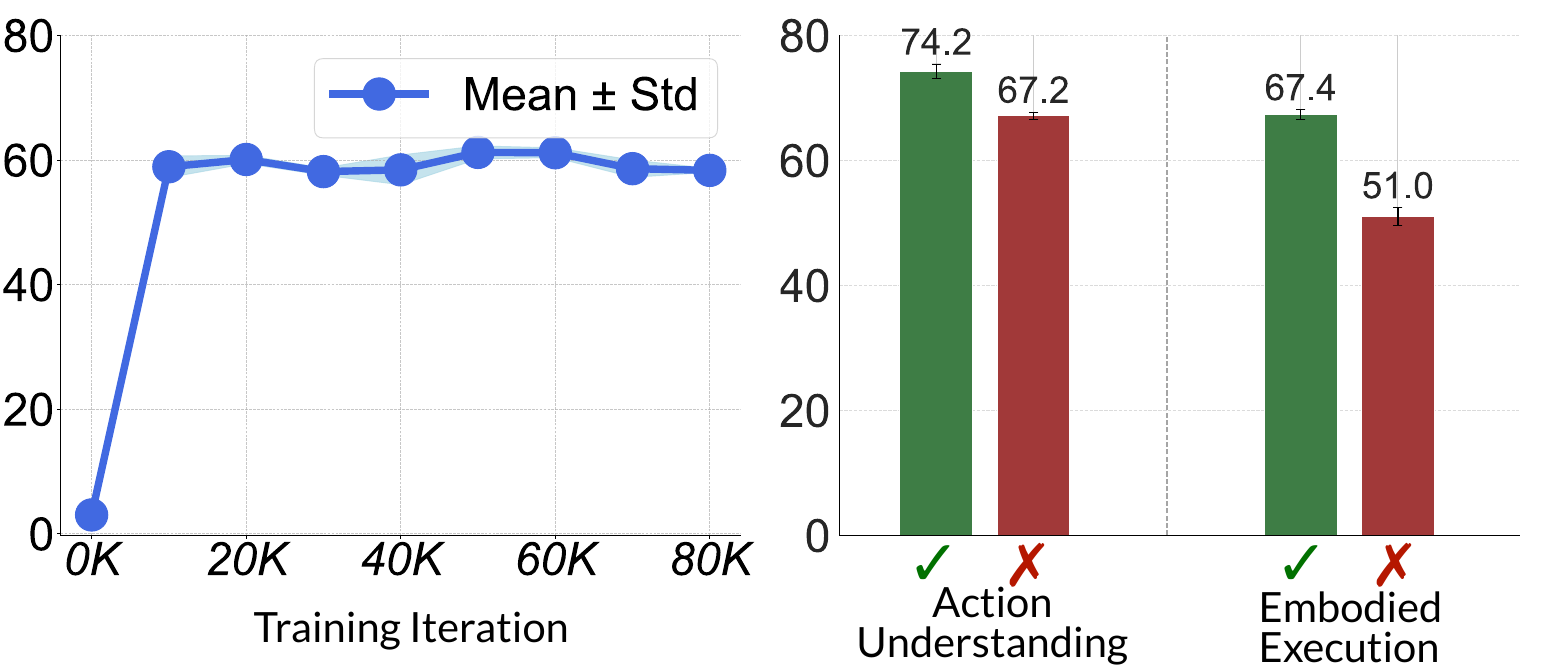}
    \caption{\textbf{Alignment probing results.}  
Left: Alignment scores tracked over the model training progress.  
Right: Alignment scores computed separately for task success and failure subsets from two pretrained models.  
We use the average retrieval accuracy (\%) on the test set as a measure of representation alignment.  
Alignment is trained and tested under four different settings while ensuring equal sample sizes.}
    \label{fig:nomn}
    \vspace{-2ex}
\end{figure}

We explore whether models of action understanding and embodied execution, though trained in isolation, converge spontaneously—like dancers finding harmony without rehearsal—in their neural representations.
To this end, we measure the representation alignment of the two models at different stages of the training process.  
\Cref{fig:nomn} (L) shows that the alignment between the two models increases rapidly in the early stages of training and reaches a high level. In particular, training only two linear transformations achieves more than 60\% accuracy in determining whether the representations correspond to the same underlying action.
This phenomenon may arise because the action understanding model and the embodied execution model, despite being trained for different tasks, both require effective modeling and abstraction of the underlying principles and structured patterns of object interactions.
This observation also aligns with the \emph{Platonic Representation Hypothesis}~\cite{huh2024position}, which states that neural networks, trained with different objectives on different datasets and modalities, converge to a shared statistical model of reality in their representation spaces.

 \obs{1}{
Independently trained action understanding and embodied execution models exhibit a swift emergence of meaningful neural alignment, suggesting a convergence towards representations of the common underlying reality.}

Furthermore, we aim to investigate whether there exists a relationship between the degree of representation alignment and task success. To this end, we conduct two sets of experiments:  
(1) We compute the representation alignment for correctly and incorrectly recognized samples in action understanding with respect to embodied execution.  
(2) Similarly, we compute the alignment for successfully and unsuccessfully completed tasks in embodied execution with respect to action understanding.  
In all experiments, we control for the same number of training and testing samples in alignment probing and train separate probes for each setting.

As shown in \Cref{fig:nomn} (R), we observe that the alignment score for the subset of task-successful samples is significantly higher than that for the subset of task-failed samples. We hypothesize that this may be because task-successful samples are associated with higher-quality representations of the underlying reality, leading to more robust alignment.

\obs{2}{
Task-successful samples exhibit significantly higher representation alignment, potentially due to their higher-quality representations of the underlying reality.}

This observation further motivates us to consider whether a causal relationship exists.
Specifically, is representation alignment merely a byproduct of better representations, or can it also actively contribute to their formation?
Could promoting neural representation alignment improve representation quality and, in turn, benefit the task performance of both models?

\section{Aligning Representations with Mirror Neurons}
\label{sec:aligning}

Inspired by the observations above, we further explore explicitly aligning the neural representations of action understanding and embodied execution during model training. 
Structurally, this approach establishes a shared representational space by directly linking perceptual and motor pathways, reflecting the anatomical organization of sensorimotor circuits in biological systems. Functionally, these structural principles promote bidirectional information flow, enabling seamless sensorimotor integration, akin to the role of mirror neurons in action recognition and execution.

To achieve this, we simultaneously train the action understanding model \aup, and the embodied execution model \eep, in a coupled structure, as shown in~\cref{fig:aligning}.
We adopt a minimal modification approach, where, on top of the original training objectives of both models, we only introduce two linear transformations \tu and \te to align their internal representations throughout this process.
Specifically, we denote the training objective of the action understanding model as $\mathcal{L}_\text{AU}(\theta_u)$ and that of the embodied execution model as $\mathcal{L}_\text{EE}(\theta_e)$. We define these two task-specific training objectives following previous work~\cite{wang2023internvid,zhang2024arp}, as they are not the primary focus of this study. 
Notably, we simply introduce a bidirectional contrastive loss $\mathcal{L}_\text{align}$, as defined in \cref{eq:align_loss}. 
The first term encourages each sample's representation in the action understanding model to align closely with its paired counterpart in the embodied execution model while distinguishing it from other samples in the batch. The second term symmetrically enforces the same constraint for the embodied execution model relative to the action understanding model. This bidirectional alignment objective helps regularize the latent spaces robustly, effectively bringing the representations of observing and executing similar actions closer while pushing apart those of different actions.

\begin{figure}[tb]
    \centering
    \includegraphics[width=\linewidth]{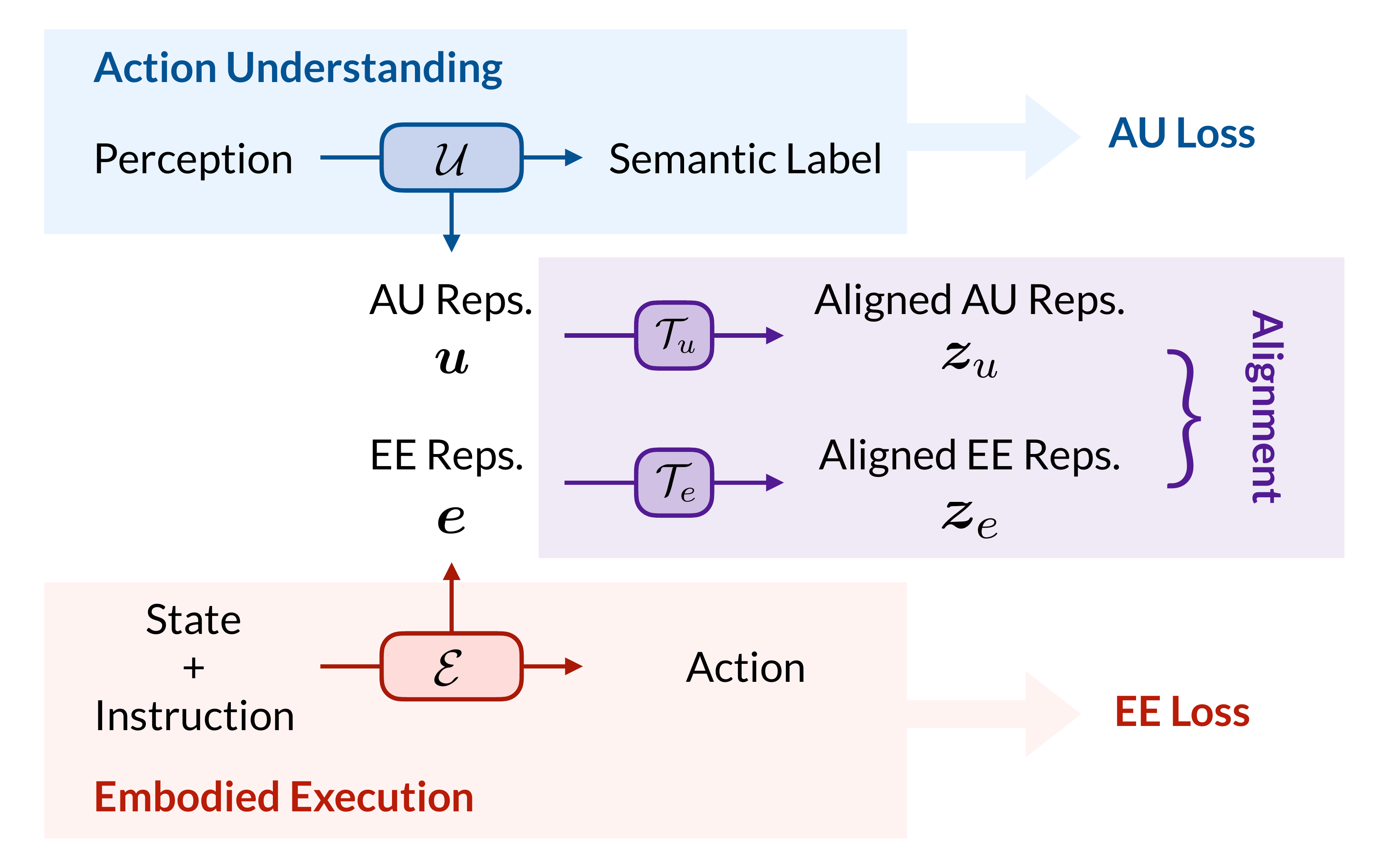}
\caption{\textbf{Aligning action understanding and embodied execution with mirror neurons.}  
We employ two linear layers to align intermediate representations between the Action Understanding (AU) model and the Embodied Execution (EE) model.  
The two models are jointly trained to align their representations while simultaneously optimizing for their respective task objectives.}
    \label{fig:aligning}
\end{figure}

Our final training objective can be formulated as:

\begin{equation}
    \mathcal{L}_\text{final} = \mathcal{L}_\text{AU} + \lambda_\text{EE} \mathcal{L}_\text{EE} + \lambda_\text{align} \mathcal{L}_\text{align},
\end{equation}
where $\lambda_\text{EE}$ and $\lambda_\text{align}$ are hyperparameters to balance the loss scales. We jointly optimize parameters of all models end-to-end:
\begin{equation}
    (\phi_u^*, \phi_e^*, \theta_u^*, \theta_e^*) = \arg\min_{\phi_u, \phi_e, \theta_u, \theta_e} \mathcal{L}_\text{final}.
\end{equation}
In this process, the optimization objective of representation alignment effectively regularizes the training of \aup and \eep simultaneously. This objective is consistent with the mechanism of mirror neurons, which map observed and executed similar actions to shared neural representations.
From an information-theoretic perspective, this optimization objective is equivalent to optimizing a lower bound to maximize the mutual information between \bmu and \bme. Please refer to~\cref{sec:theoretical} and~\cref{thm_align} for the theoretical derivation.

\section{Experiments}
\label{sec:exp}

In this section, we aim to investigate the impact of the proposed mirror framework on model training. Specifically, we design experiments to address the following questions: 
(1) How does it affect the performance of models for action understanding and embodied execution, respectively? 
(2) What influence does it have on the representations of these two tasks? 
(3) Which representations should be aligned, and what are the effects of this strategy?

\begin{table*}[t]
\center
\small
\setlength{\tabcolsep}{4pt} %
\resizebox{1.0\linewidth}{!}{
\begin{tabular}{l | c c c c c c c c c c c c c c c c c c | c}
\thickhline 
Method & \makecell{Close\\Jar} & \makecell{Drag\\Stick} & \makecell{Insert\\Peg} & \makecell{Meat off\\Grill} & \makecell{Open\\Drawer} & \makecell{Place\\Cup} & \makecell{Place\\Wine} & \makecell{Push\\Btn.} & \makecell{Put in\\Cpd.} & 
\makecell{Put in\\Drawer} & \makecell{Put in\\Safe}  & \makecell{Screw\\Bulb} & \makecell{Slide\\Block} & \makecell{Sort\\Shape} & \makecell{Stack\\Block} & \makecell{Stack\\Cup} & \makecell{Sweep\\Dust} & \makecell{Turn\\Tap} & Avg. \\
\hline 
ViCLIP~\cite{wang2023internvid}$\dagger$& 0.0 & 0.0 & 0.0 & 48.0 & 0.0 & 0.0 & 0.0 & 0.0 & 
0.0 & 0.0 & 0.0 & 0.0 & 0.0 & 16.0 & 8.0 & 4.0 & 52.0 & 60.0 & 10.4 \\
ViCLIP~\cite{wang2023internvid}  & 69.3 & \textbf{96.0} & 21.3 & 90.7 & \textbf{100.0} & 88.0 & \textbf{56.0} & \textbf{76.0} & 
14.7 & \textbf{100.0} & 42.7 & 78.7 & 80.0 & 77.3 & \textbf{38.7} & 69.3 & \textbf{100.0} & 89.3 & 71.6 \\
\hline
\rowcolor{mygray}
MN (Ours)  & \textbf{72.0} & \textbf{96.0} & \textbf{33.3} & \textbf{96.0} & \textbf{100.0} & \textbf{89.3} & \textbf{56.0} & \textbf{76.0} & 
\textbf{17.3} & \textbf{100.0} & \textbf{49.3} & \textbf{84.0} & \textbf{82.7} & \textbf{85.3} & 37.3 & \textbf{80.0} & \textbf{100.0} & \textbf{93.3} & \textbf{74.9} \\
\thickhline
\end{tabular}}
\caption{Performance comparison of action recognition across 18 diverse tasks. $\dagger$ indicates results obtained without task-specific fine-tuning (zero-shot).}
\label{tab:au}
    \vspace{-3mm}
\end{table*}

\subsection{Implementation details}
\label{exp_implementation}

\paragraph{Action Understanding} 
We adopt ViCLIP~\cite{wang2023internvid}, a generic video-language model, consisting of a video encoder (a standard ViT~\cite{dosovitskiy2020vit} with spatiotemporal attention) and a text encoder (identical to that of CLIP~\cite{pmlr-clip}). 
We employ the model weights pretrained on InternVid~\cite{wang2023internvid} and fine-tune it in the object manipulation domain.
Specifically, we use demonstration trajectories from RLBench~\cite{james2019rlbench,shridhar2022peract}, where RGB videos rendered from a front-facing perspective are paired with language instructions, to fine-tune both the video encoder and the text encoder. 
The learning rate of the video encoder is $1 \times 10^{-5}$, and that of the text encoder is 5\% of it.
Subsequently, we evaluate action recognition accuracy on a test set by selecting the language instruction with the highest similarity for each test video and computing the accuracy accordingly.

\paragraph{Embodied Execution} 
We use an Autoregressive Policy (ARP)~\cite{zhang2024arp} with a Multi-View Transformer (MVT)~\cite{goyal2023rvt,goyal2024rvt2} backbone. The model takes multi-view RGBD images (processed into point clouds) and language instructions as input, predicting the next target end-effector pose and gripper states.
We conduct experiments on a standard multi-task manipulation benchmark from RLBench~\cite{james2019rlbench}. The benchmark includes 18 tasks, each defined by a language description and featuring 2 to 60 variations, such as different object colors or locations. A Franka Panda robot with a parallel jaw gripper is tasked with execution, simulated via CoppeliaSim~\cite{coppeliaSim}. We train and test on the same dataset as prior works~\cite{shridhar2022peract,goyal2023rvt,goyal2024rvt2,zhang2024arp}, using 100 demonstrations per task for training and 25 unseen demonstrations for testing. Training settings and hyperparameters remain identical to those in the baseline~\cite{zhang2024arp}.

\paragraph{Representation Alignment}
We take the output features of the video encoder and the last block of the policy network and map them to $\mathbb{Z} \subset \mathbb{R}^{1024}$, using two separate linear layers. For contrastive learning training, we construct positive sample pairs based on language instruction consistency. That is, observing and executing actions with the same language instruction (\eg, \textit{``take the steak off the grill''}) are encouraged to align in representation, even if they do not come from the exact same episode (\eg, differing in object layout or action sequence). We set the temperature parameter to $\tau = 0.1$ and the learning rate to $1 \times 10^{-4}$. The loss weights are set as $\mathcal{L}_\text{EE} = 1$ and $\mathcal{L}_\text{align} = 0.5$.

\subsection{Action Understanding}

First, we investigate the effect of the proposed mirror neuron alignment framework on the action understanding model. To this end, we compare the top-1 action recognition accuracy with regard to language instruction among the following models: the original ViCLIP model pretrained on InternVid~\cite{wang2023internvid}, tested in a zero-shot manner; the same model fine-tuned on object interaction data; and one trained jointly with the proposed mirror neuron alignment objective. For the latter two, all other training conditions remain the same.

\Cref{tab:au} shows that ViCLIP exhibits some level of zero-shot action recognition capability, considering the total number of 200+ possible fine-grained variants. 
However, its performance on most action classes is suboptimal, as fine-grained classification requires nuanced spatial (\eg, \textit{``stack the wine bottle to the right of the rack''}), temporal (\eg, \textit{``push the maroon button, then push the green button''}), and quantitative reasoning (\eg, \textit{``stack 2 maroon blocks''}), which is challenging and relatively scarce in large-scale pretraining data.
Fine-tuning on the corresponding dataset significantly enhances the model performance.
Notably, our approach extensively outperforms the baseline in action recognition accuracy.

We attribute this improvement to the intrinsic connection and complementary nature between the embodied execution task and action recognition.  
For example, action recognition must distinguish between commands such as \emph{“put the ring on the azure spoke”} and \emph{“put the ring on the yellow spoke”}, recognizing their nuances in relation to the corresponding goal. Meanwhile, embodied execution not only requires identifying the correct 3D interaction locations (affordances) but also generating the appropriate motion trajectory to complete the action.  
The feature interaction and alignment mechanisms in the proposed framework facilitate learning a more comprehensive task representation, which in turn enhances the generalization ability of action recognition. These results demonstrate the synergistic role of embodied execution in improving action understanding, highlighting the effectiveness of our approach.

\begin{table*}[t]
\center
\small
\setlength{\tabcolsep}{1pt} %
\resizebox{1.0\linewidth}{!}{
\begin{tabular}{l | c c c c c c c c c c c c c c c c c c | c}
\thickhline 
Method & \makecell{Close\\Jar} & \makecell{Drag\\Stick} & \makecell{Insert\\Peg} & \makecell{Meat off\\Grill} & \makecell{Open\\Drawer} & \makecell{Place\\Cup} & \makecell{Place\\Wine} & \makecell{Push\\Btn.} & \makecell{Put in\\Cpd.} & 
\makecell{Put in\\Drawer} & \makecell{Put in\\Safe}  & \makecell{Screw\\Bulb} & \makecell{Slide\\Block} & \makecell{Sort\\Shape} & \makecell{Stack\\Block} & \makecell{Stack\\Cup} & \makecell{Sweep\\Dust} & \makecell{Turn\\Tap} & Avg. \\
\hline 
BC-Z CNN~\cite{jang2021bcz} & 0.0 & 0.0 & 0.0 & 0.0 & 4.0 & 0.0 & 0.0 & 0.0 & 0.0 & 8.0 & 4.0 & 0.0 & 0.0 & 0.0 & 0.0 & 0.0 & 0.0 & 8.0 & 1.3 \\
BC-Z ViT~\cite{jang2021bcz} & 0.0 & 0.0 & 0.0 & 0.0 & 0.0 & 0.0 & 0.0 & 0.0 & 0.0 & 0.0 & 0.0 & 0.0 & 0.0 & 0.0 & 0.0 & 0.0 & 0.0 & 16.0 & 1.3 \\
C2F-ARM~\cite{James_2022_CVPR} & 24.0 & 24.0 & 4.0 & 20.0 & 20.0 & 0.0 & 8.0 & 72.0 & 0.0 & 4.0 & 12.0 & 8.0 & 16.0 & 8.0 & 0.0 & 0.0 & 0.0 & 68.0 & 20.1 \\
HiveFormer~\cite{guhur2022instructiondriven} &  52.0 & 76.0 & 0.0 & 100.0 & 52.0 & 0.0 & 80.0 & 84.0 & 32.0 & 68.0 & 76.0 & 8.0 & 64.0 & 8.0 & 8.0 & 0.0 & 28.0 & 80.0 & 45.3 \\
PolarNet~\cite{chen23polarnet} & 36.0 & 92.0 & 4.0 & 100.0 & 84.0 & 0.0 & 40.0 & 96.0 & 12.0 & 32.0 & 84.0 & 44.0 & 56.0 & 12.0 & 4.0 & 8.0 & 52.0 & 80.0 & 46.4 \\
PerAct~\cite{shridhar2022peract} & 55.2 \stddev{4.7} & 89.6 \stddev{4.1} & 5.6 \stddev{4.1} & 70.4 \stddev{2.0} & 88.0 \stddev{5.7} & 2.4 \stddev{3.2} & 44.8 \stddev{7.8} & 92.8 \stddev{3.0} & 28.0 \stddev{4.4} & 51.2 \stddev{4.7} & 84.0 \stddev{3.6} & 17.6 \stddev{2.0} & 74.0 \stddev{13.0} & 16.8 \stddev{4.7} & 26.4 \stddev{3.2} & 2.4 \stddev{2.0} & 52.0 \stddev{0.0} & 88.0 \stddev{4.4} & 49.4 \\
Act3D~\cite{gervet2023actd} & 92.0 & 92.0 & 27.0 & 94.0 & 93.0 & 3.0 & 80.0 & 99.0 & 51.0 & 90.0 & 95.0 & 47.0 & 93.0 & 8.0 & 12.0 & 9.0 & 92.0 & 94.0 & 65.0 \\
RVT~\cite{goyal2023rvt} & 52.0 \stddev{2.5} & 99.2 \stddev{1.6} & 11.2 \stddev{3.0} & 88.0 \stddev{2.5} & 71.2 \stddev{6.9} & 4.0 \stddev{2.5} & 91.0 \stddev{5.2} & 100.0 \stddev{0.0} & 49.6 \stddev{3.2} & 88.0 \stddev{5.7} & 91.2 \stddev{3.0} & 48.0 \stddev{5.7} & 81.6 \stddev{5.4} & 36.0 \stddev{2.5} & 28.8 \stddev{3.9} & 26.4 \stddev{8.2} & 72.0 \stddev{0.0} & 93.6 \stddev{4.1} & 62.9 \\
RVT-2~\cite{goyal2024rvt2} & \textbf{100.0}\stddev{0.0} & 99.0\stddev{1.7} & 40.0\stddev{0.0} & \textbf{99.0}\stddev{1.7} & 74.0\stddev{11.8} & 38.0\stddev{4.5} & 95.0\stddev{3.3} & \textbf{100.0}\stddev{0.0} & 66.0\stddev{4.5} & 96.0\stddev{0.0} & \textbf{96.0}\stddev{2.8} & 88.0\stddev{4.9} & \textbf{92.0}\stddev{2.8} & 35.0\stddev{7.1} & \textbf{80.0}\stddev{2.8} & 69.0\stddev{5.9} & \textbf{100.0}\stddev{0.0} & 99.0\stddev{1.7} & 81.4 \\
ARP~\cite{zhang2024arp} &  \textbf{100.0}\stddev{0.0} & \textbf{100.0}\stddev{0.0} & 93.3\stddev{2.3} & 92.0\stddev{0.0} & 90.7\stddev{2.3} & 49.3\stddev{6.1} & 93.3\stddev{8.3} & \textbf{100.0}\stddev{0.0} & 66.7\stddev{6.1} & \textbf{100.0}\stddev{0.0} & 88.0\stddev{4.0} & \textbf{92.0}\stddev{4.0} & 86.7\stddev{6.1} & 49.3\stddev{6.1} & 56.0\stddev{4.0} & 82.7\stddev{6.1} & 98.7\stddev{2.3} & 97.3\stddev{2.3} & 85.3 \\
\hline
\rowcolor{mygray}
MN (Ours) & \textbf{100.0}\stddev{0.0} & \textbf{100.0}\stddev{0.0} & \textbf{94.7}\stddev{4.6} & 93.3\stddev{2.3} & \textbf{93.3}\stddev{2.3} & \textbf{53.3}\stddev{12.2} & \textbf{97.3}\stddev{2.3} & \textbf{100.0}\stddev{0.0} & \textbf{70.7}\stddev{2.3} & \textbf{100.0}\stddev{0.0} & 93.3\stddev{2.3} & 88.0\stddev{0.0} & 82.7\stddev{2.3} & \textbf{66.7}\stddev{10.1} & 72.0\stddev{6.9} & \textbf{93.3}\stddev{4.6} & \textbf{100.0}\stddev{0.0} & \textbf{100.0}\stddev{0.0} & \textbf{88.8} \\
\thickhline
\end{tabular}}
\caption{Performance comparison of embodied execution across 18 diverse tasks. A single model is evaluated across all tasks, with success criteria defined according to RLBench~\cite{james2019rlbench}.
}
\label{tab:ee}
\end{table*}

\begin{figure*}[t]
    \centering
    \includegraphics[width=\linewidth]{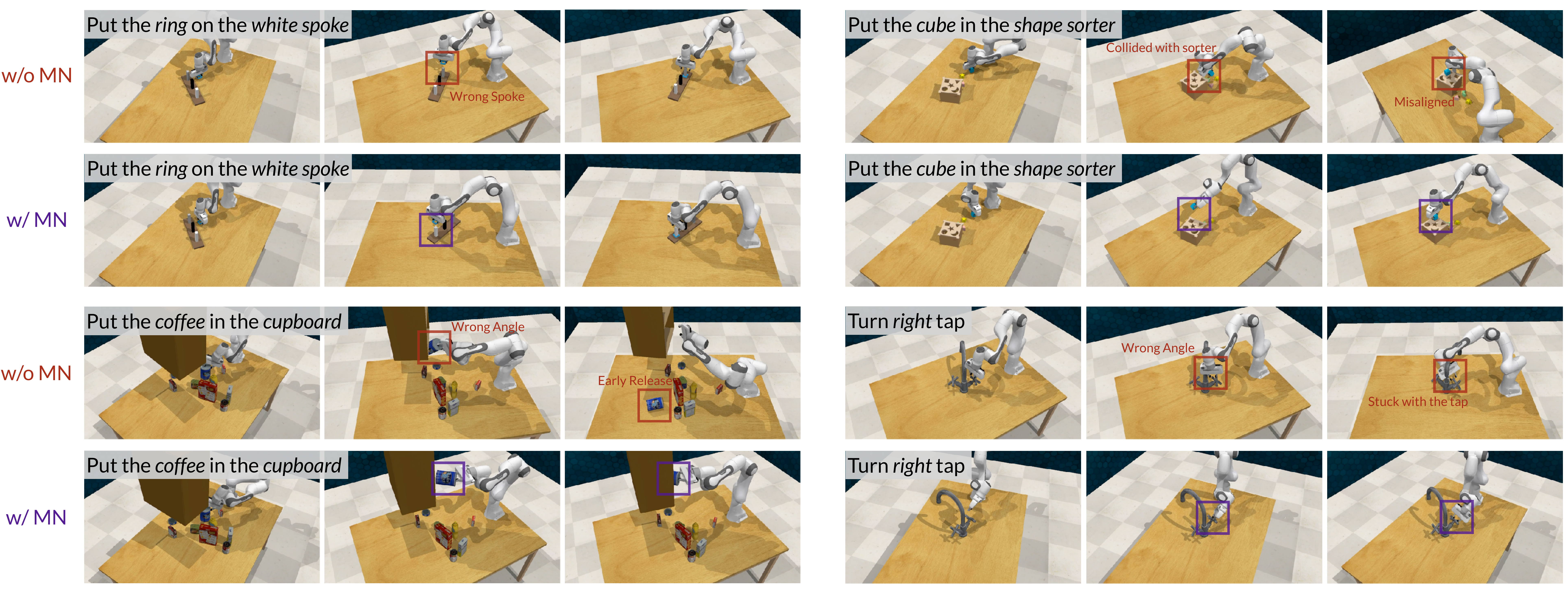}
    \caption{Visualization comparison of embodied execution results. We compare our approach with a baseline model without the MN design (\ie, ARP~\cite{zhang2024arp}). The visualization sequences are captured from rotating camera perspectives, with key details highlighted in boxes for clarity. The proposed MN design helps learning affordances and fine-grained operations, leading to improved performance.}
    \label{fig:visual_compare}
        \vspace{-3mm}
\end{figure*}

\subsection{Embodied Execution}

We investigate the impact of the proposed framework on embodied execution tasks, specifically language-conditioned multi-task object manipulation. Following prior works, we evaluate the trained policy by performing rollouts in unseen environments from RLBench~\cite{james2019rlbench} and measuring the success rates for various tasks.
We compare our approach with various baselines, including simple image-to-action behavioral cloning baselines~\cite{jang2021bcz}, 3D object manipulation methods~\cite{james2019rlbench,shridhar2022peract,guhur2022instructiondriven,goyal2023rvt,goyal2024rvt2}, as well as the most directly comparable baseline, ARP~\cite{zhang2024arp}. 
All approaches are trained and tested using input images of size $128 \times 128$, except for Act3D, which uses images of size $256 \times 256$.

As shown in~\Cref{tab:ee}, our method achieves notable improvements over state-of-the-art approaches in most tasks, with an average success rate increase of 3.5\% compared to the direct baseline. Furthermore, it demonstrates significant gains in tasks requiring fine-grained affordance reasoning, such as \textit{Sort Shape} and \textit{Stack Cup}.  
To better understand the reasons behind the improvements, we compare the rollout trajectories of models with the MN design against those without MN (which degenerates to ARP~\cite{zhang2024arp}). 
\Cref{fig:visual_compare} presents several comparison results. Our proposed MN method demonstrates enhanced understanding of affordances, such as how to interact with different objects. It also improves target accuracy and refines the precision of fine-grained interactions. Additional details can be found in the supplementary videos.

Several factors may contribute to these improvements. First, the representation alignment with the action recognition model enables learning a more comprehensive and disentangled task representation, which helps improve robust generalization. Second, action recognition may provide higher-quality appearance and geometric representations, which, through alignment, enhance the representation learning of embodied execution and improve its corresponding task performance.

\begin{figure*}[t]
    \centering
    \includegraphics[width=\linewidth]{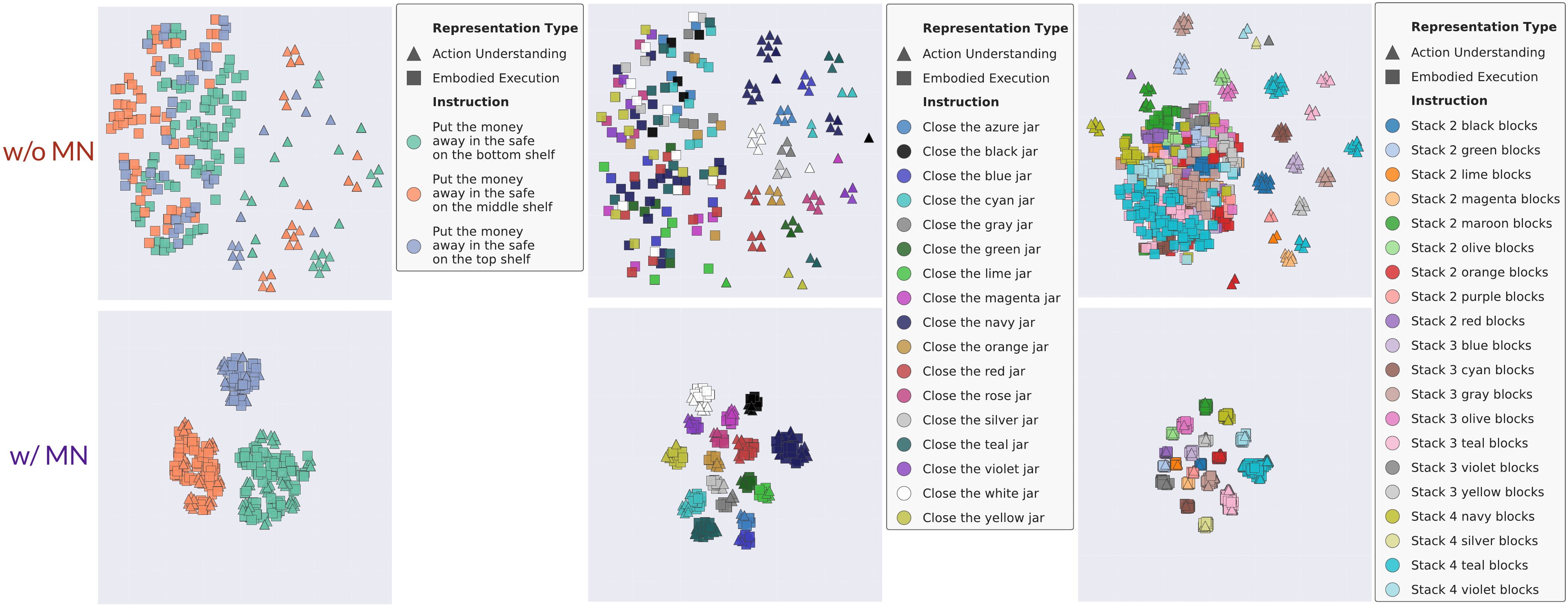}
    \caption{Visualization comparison of latent representations. We align the representations of two models with the MN design and two models without it into their respective shared latent spaces, followed by dimensionality reduction for visualization. Different shapes represent representations from different models, while different colors indicate the corresponding language instructions.}
    \label{fig:reps}
        \vspace{-3mm}
\end{figure*}

\subsection{Representation Analysis}

We aim to further analyze the impact of the proposed MN module on learned neural representations to gain deeper insight.
We align the representations of action understanding and embodied execution to a shared latent space \(\mathbb{Z}\) using alignment probing and then visualize them via t-SNE~\cite{JMLR:v9:vandermaaten08a} dimensionality reduction. Specifically, we compare baseline models that are trained separately without the MN module to our proposed MN approach, which couples the training of both models.

\Cref{fig:reps} shows that our proposed method not only facilitates the alignment of representations between action understanding and embodied execution, as indicated by shapes of the same color clustering together, but also enhances the ability to distinguish fine-grained nuances in instructions, with different colors forming distinct clusters.
We believe that this representation alignment and disentanglement may be the cause of the improved generalization ability.
Additionally, we find that the MN module exhibits a similar disentanglement effect on representations before the linear transformation (in higher-dimensional space). We also illustrate the evolution of representations throughout the training process; please see \cref{sec:supp_results} for details.

\subsection{Ablation Study}

Additionally, we study how the implementation of the proposed mirror neuron alignment module influences model performance.
We primarily explore two questions: (1) What kind of action representations should be aligned? (2) How strict should the alignment criterion be? These questions correspond to fundamental aspects of contrastive learning: positive sample construction and temperature.

In our experiments, we explore three different strategies for constructing positive samples:

\begin{enumerate}

    \item \textbf{By Episode:} Positive samples are drawn from the same episode. This means that the action understanding and embodied execution models are aligned when learning from the exact same episode, including object placement, initial positions, and action progression, although their input and prediction modalities are different. In practice, we sample paired data from the same episode for both models to construct positive sample pairs.

    \item \textbf{By Instruction:} Positive samples need not originate from the same episode. Even if scene layouts and action sequences differ, as long as the underlying goal corresponds to the same language instruction (\eg, \textit{``open the top drawer''}), their representations are aligned. In implementation, for each sample, we randomly sample a paired positive sample from the subset corresponding to the same instruction. This is the default setting used in our other experiments.

    \item \textbf{By Task:} This is the most relaxed criterion. Positive samples do not even need to correspond to the same language instruction (\eg, \text{``open the top drawer''} and \textit{``open the bottom drawer''}). As long as they belong to the same action class (\textit{Open Drawer}), we align their representations accordingly.
    
\end{enumerate}

\begin{table}[t]
    \centering
    \footnotesize
    \setlength{\tabcolsep}{7.5pt} %
    \begin{tabular}{l | c c c | c c c}
    \thickhline 
    Task & \multicolumn{3}{c|}{Action Understanding} & \multicolumn{3}{c}{Embodied Execution} \\
    \hline
    Temp. ($\tau$) & 0.02 & 0.1 & 0.2 & 0.02 & 0.1 & 0.2 \\
    \hline
    By Episode  & 74.0 & 72.9 & 74.0 & 86.4 & 88.1 & 87.8 \\
    By Instruction  & 74.9 & 74.9 & 77.1 & 86.7 & 88.8 & 87.0 \\
    By Class & 73.6 & 71.6 & 72.2 & 86.9 & 85.7 & 85.6 \\
    \thickhline
    \end{tabular}
    \caption{Ablation study results. We analyze the impact of alignment strategies and contrastive learning temperatures ($\tau$). Model performance is evaluated on both Action Understanding and Embodied Execution tasks.}
    \label{tab:ablation}
    \vspace{-4mm}
\end{table}

Since the choice of a reasonable temperature parameter is closely related to the definition of positive samples, we evaluate each strategy using three different temperature values: $\tau = $ 0.02, 0.1, 0.2. Based on the results in~\Cref{{tab:ablation}}, we derive the following observations:

\begin{itemize}
    \item All strategies generally outperform the baseline, demonstrating the effectiveness of aligning representations of the two modules akin to MN.
    
    \item The optimal temperature varies depending on the positive sample construction strategy. For example, \textit{by episode} benefits from a higher temperature, whereas \textit{by class} performs better with a lower temperature. This aligns with the relationship between positive sample definitions and intra-/inter-class distances.

    \item Overall, \textit{by instruction} proves to be a well-balanced strategy, maintaining a good trade-off between variation and semantic consistency, leading to the best generalization performance. In contrast, \textit{by class} exhibits slightly lower performance, possibly because the two tasks still require distinguishing between fine-grained language descriptions.
\end{itemize}
\section{Related Work}

\subsection{Mirror Neurons}

The discovery of mirror neurons (MNs) in macaques represents a pivotal advancement in understanding the neural mechanisms underlying action perception and execution. These neurons, primarily located in area F5 of the premotor cortex and in the inferior parietal lobule (IPL), activates both when an individual performs a specific action and when they observe another individual executing the same or a similar action~\citep{di1992understanding,gallese1996action,rizzolatti2004mirror}.
In humans, neuroimaging studies provide evidence for a homologous mirror neuron system, including motor-related regions such as the precentral gyrus and the inferior frontal gyrus~\citep{gazzola2009observation,iacoboni1999cortical,keysers2009expanding}. These areas are activated during both action execution and observation, suggesting their role in forming shared neural representations of motor and sensory experiences. This shared representation is thought to facilitate action understanding, imitation, and potentially social cognition~\citep{umilta2001know,iacoboni2005grasping}.
Inspired by the functionality of MNs, several studies propose corresponding computational models~\citep{oztop2002mirror,zhong2011robot,liu2019mirroring,seker2022imitation}. Nonetheless, existing approaches have yet to fully establish a unified representation that integrates embodied action execution with action understanding. Furthermore, the contribution of mirroring to these cognitive processes remains underexplored.

\subsection{Representation Alignment}

Representation alignment seeks to bridge the gap between feature representations across different models, modalities, or domains, emerging as a vital topic in machine learning. It has been shown to serve as a meaningful objective for improving model performance, enhancing training efficiency, and enabling generalization across diverse tasks. For instance, prior studies have explored the similarity of neural network representations, revealing that models with different architectures and initializations tend to exhibit a certain degree of alignment~\cite{morcos2018insights,kornblith2019similarity}.
In the context of cross-modal alignment, researchers achieve robust task generalization by aligning vision and language representations~\cite{pmlr-clip}, scaling this process with noisy text supervision~\cite{jia2021scaling}, or leveraging pretrained unimodal models~\cite{zhang2024assessing}. Moreover, recent studies have found that internal representations tend to align even within separately trained unimodal models~\cite{maniparambil2024vision,huh2024position,zhang2024assessing}. Specifically, \textit{The Platonic Representation Hypothesis}~\cite{huh2024position} suggests that this convergence reflects the emergence of a shared statistical model of reality, reminiscent of Plato’s concept of an idealized world.
In generative modeling, aligning with pretrained discriminative representations has also been shown to significantly enhance image generation quality and accelerate training convergence~\cite{yu2025representation,ma2024janusflow}.
Our approach explores representation alignment from a novel perspective, integrating insights from mirror neurons. Specifically, we investigate how neural representations for action understanding and embodied execution align at both the task and functional levels in embodied agents.

\section{Conclusion}
In this paper, we present a novel framework that unifies action understanding and embodied execution through representation learning, inspired by the biological mechanism of mirror neurons. 
We first discover that models separately trained for these two tasks exhibit spontaneous representation alignment, which is associated with task success.
Building on this insight, we introduce an approach that explicitly aligns the representations of observed and executed actions within a shared latent space using contrastive learning. Experiments on action recognition and multi-task object manipulation benchmarks show that this simple method promotes synergy between the two tasks, enhancing representation quality and generalization.

We hope our work offers a novel perspective by treating action understanding and embodied execution as intertwined, rather than modeling them as isolated cognitive processes. On a broader scale, it reflects how cognition emerges from sensorimotor engagement with the environment, as emphasized by embodied cognition.
Future work could also adopt more sophisticated representation learning strategies, such as hierarchical alignment, and incorporate multisensory integration to better handle complex real-world tasks. Finally, exploring aspects of social cognition could further enrich the framework by capturing interactive and cooperative dynamics.

\clearpage
\section*{Acknowledgment}
This work was supported by National Science and Technology Major Project (2022ZD0114904) and NSFC-6247070125.

{
    \small
    \bibliographystyle{ieeenat_fullname}
    \bibliography{main}
}

\clearpage
\setcounter{page}{1}
\maketitlesupplementary

\appendix

\begin{table*}[h]
\centering
\scriptsize
\begin{tabular}{llccl} 
\toprule
Task                      & Variation   Type           & \# of Variations     & Avg. Keyframes       & Language     Template         \\
\midrule
\texttt{open drawer}      & placement                  &           3          &           3.0        & ``open the \blank drawer'' \\
\texttt{slide block}      & color                      &           4          &           4.7        & ``slide the block to \blank target'' \\
\texttt{sweep to dustpan} & size                       &           2          &           4.6        & ``sweep dirt to the \blank dustpan'' \\
\texttt{meat off grill}   & category                   &           2          &           5.0        & ``take the \blank off the grill'' \\
\texttt{turn tap}         & placement                  &           2          &           2.0        & ``turn \blank tap'' \\
\texttt{put in drawer}    & placement                  &           3          &          12.0        & ``put the item in the \blank drawer'' \\
\texttt{close jar}        & color                      &          20          &           6.0        & ``close the \blank jar'' \\
\texttt{drag stick}       & color                      &          20          &           6.0        & ``use the stick to drag the cube onto the \blank target'' \\
\texttt{stack blocks}     & color, count               &          60          &          14.6        & ``stack \blank \blank blocks''  \\
\texttt{screw bulb}       & color                      &          20          &           7.0        & ``screw in the \blank light bulb'' \\
\texttt{put in safe}      & placement                  &           3          &           5.0        & ``put the money away in the safe on the \blank shelf'' \\
\texttt{place wine}       & placement                  &           3          &           5.0        & ``stack the wine bottle to the \blank of the rack'' \\
\texttt{put in cupboard}  & category                   &           9          &           5.0        & ``put the \blank in the cupboard'' \\
\texttt{sort shape}       & shape                      &           5          &           5.0        & ``put the \blank in the shape sorter'' \\
\texttt{push buttons}     & color                      &          50          &           3.8        & ``push the \blank button, [then the \blank button]'' \\
\texttt{insert peg}       & color                      &          20          &           5.0        & ``put the ring on the \blank spoke'' \\
\texttt{stack cups}       & color                      &          20          &          10.0        & ``stack the other cups on top of the \blank cup'' \\
\texttt{place cups}       & count                      &           3          &          11.5        & ``place \blank cups on the cup holder'' \\
\bottomrule
\end{tabular}
\caption{Language-conditioned tasks and variations in RLBench~\citep{james2019rlbench}.}
\label{table:task_desc}
\end{table*}

\section{Theoretical Analysis}
\label{sec:theoretical}

\begin{theorem}
    \label{thm_probe}
    The mutual information between the action understanding representation \bmu and the embodied execution representation \bme can be estimated by optimizing the transformations \( \mathcal{T}_u \) and \( \mathcal{T}_e \) to minimize the bidirectional alignment loss \( \mathcal{L}_{\text{align}} \).
\end{theorem}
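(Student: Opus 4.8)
The plan is to recognize $\mathcal{L}_{\text{align}}$ as a symmetrized InfoNCE objective and to invoke the standard variational relationship between the InfoNCE loss and mutual information. First I would isolate one direction of the bidirectional loss, say the term that retrieves $\bm{z}_e^{(i)}$ from $\bm{z}_u^{(i)}$ against the $B$ candidates $\{\bm{z}_e^{(j)}\}_{j=1}^{B}$. Writing the unnormalized critic as $f(\bm{z}_u,\bm{z}_e)=\exp(\text{sim}(\bm{z}_u,\bm{z}_e)/\tau)$, each summand is exactly the log-probability that a softmax classifier assigns to the correct positive index, so this direction is a multi-class cross-entropy for identifying the matched pair. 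Because $\mathcal{T}_u$ and $\mathcal{T}_e$ are deterministic maps, the data processing inequality gives $I(\bm{z}_u;\bm{z}_e)\le I(\bm{u};\bm{e})$, so any lower bound on $I(\bm{z}_u;\bm{z}_e)$ that the probe maximizes over $(\phi_u,\phi_e)$ is the quantity we track.

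The core step is the inequality
\begin{equation}
    I(\bm{z}_u;\bm{z}_e)\;\ge\;\log B-\mathcal{L}_{\text{align}}.
\end{equation}
To establish it I would take expectations of the positive term under the joint $p(\bm{z}_u,\bm{z}_e)$ and of the negatives under the product of marginals, then apply Jensen's inequality to the log-normalizer. A short calculus-of-variations argument shows the optimal critic satisfies $f^{\star}(\bm{z}_u,\bm{z}_e)\propto p(\bm{z}_e\mid\bm{z}_u)/p(\bm{z}_e)$; substituting it collapses the per-direction bound to $I(\bm{z}_u;\bm{z}_e)$ minus a residual that vanishes as $B\to\infty$. Since $\mathcal{L}_{\text{align}}$ merely averages the $u\!\to\!e$ and $e\!\to\!u$ directions, the same argument applies symmetrically, and averaging the two bounds yields the displayed inequality. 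Chaining with the data processing step gives $I(\bm{u};\bm{e})\ge\log B-\mathcal{L}_{\text{align}}$, so minimizing $\mathcal{L}_{\text{align}}$ maximizes a variational lower bound on the mutual information, i.e.\ it yields an estimate up to the additive constant $\log B$ and the finite-batch slack.

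The hard part will be the tightness and optimal-critic argument, rather than the mechanical inequality. One has to verify that the softmax normalizer $\frac{1}{B}\sum_j f(\bm{z}_u^{(i)},\bm{z}_e^{(j)})$ is a consistent estimator of the marginal density ratio and that a cosine-similarity critic with temperature $\tau$ is expressive enough for the bound to approach tightness rather than stay loose; the presence of the positive term $j=i$ inside the denominator and the well-known $O(\log B)$ ceiling on InfoNCE estimators are the two subtleties. I would handle these by restricting the claim to estimating a \emph{lower bound} on $I(\bm{u};\bm{e})$, consistent with the theorem's phrasing, and by noting that the bound tightens monotonically as the batch size grows.
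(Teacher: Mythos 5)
Your proof is correct, and it shares the paper's overall skeleton: the data processing inequality for the deterministic probes, a per-direction InfoNCE bound of the form $I(\bm{z}_u;\bm{z}_e) \ge \log B - \mathcal{L}$, and symmetrization over the two retrieval directions to handle $\mathcal{L}_{\text{align}}$. Where you genuinely diverge is in how the key inequality is established. The paper defines the batch softmax $\hat{p}(\bm{z}_u\mid\bm{z}_e)$ as a variational approximation of the true conditional, invokes non-negativity of the KL divergence to obtain $\mathbb{E}[\log p(\bm{z}_u\mid\bm{z}_e)]\ge\mathbb{E}[\log\hat{p}(\bm{z}_u\mid\bm{z}_e)]$, and then closes the argument with the heuristic that the batch marginal is uniform, $\mathbb{E}_{p(\bm{z}_u)}[\log p(\bm{z}_u)]\approx-\log B$. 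You instead take the variational-bounds route: expectations split between the joint (positive pair) and the product of marginals (negatives), Jensen's inequality applied to the log-normalizer, and the optimal-critic identification $f^{\star}\propto p(\bm{z}_e\mid\bm{z}_u)/p(\bm{z}_e)$ with a residual that vanishes as $B\to\infty$. Your version buys rigor: it dispenses with the uniform-marginal heuristic, makes the finite-batch slack and the $\log B$ estimation ceiling explicit, and raises the question of whether a cosine critic with fixed temperature can actually realize the optimal critic --- a point the paper never addresses. Your conclusion is also more conservative and better matched to what the mathematics supports: by chaining the DPI you claim only the lower bound $I(\bm{u};\bm{e})\ge\log B-\mathcal{L}_{\text{align}}$, whereas the paper additionally asserts, under an informal ``sufficient information preservation'' assumption on $\mathcal{T}_u$ and $\mathcal{T}_e$, that $I(\bm{z}_u;\bm{z}_e)$ approximates $I(\bm{u};\bm{e})$ itself; neither your argument nor the paper's makes that closing step rigorous, but only the paper leans on it. The paper's route buys brevity; yours is the sounder argument for the theorem as stated.
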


\begin{proof}
    The mutual information between \( \bm{u} \) and \( \bm{e} \) is defined as:
    \begin{equation}
        I(\bm{u}; \bm{e}) = D_{\mathrm{KL}}(p(\bm{u}, \bm{e}) \| p(\bm{u}) p(\bm{e})).
    \end{equation}
    Since direct computation is intractable, we introduce trainable transformations:
    \begin{equation}
        \bm{z}_u = \mathcal{T}_u(\bm{u}), \quad \bm{z}_e = \mathcal{T}_e(\bm{e}),
    \end{equation}
    where \( \mathcal{T}_u \) and \( \mathcal{T}_e \) are optimized via a loss function.
    By the \textit{Data Processing Inequality (DPI)}, these transformations satisfy:
    \begin{equation}
        I(\bm{z}_u; \bm{z}_e) \leq I(\bm{u}; \bm{e}),
    \end{equation}
with equality if \( \mathcal{T}_u \) and \( \mathcal{T}_e \) preserve all relevant information. Thus, we estimate \( I(\bm{u}; \bm{e}) \) indirectly via \( I(\bm{z}_u; \bm{z}_e) \).

To estimate \( I(\bm{z}_u; \bm{z}_e) \), we approximate the conditional distributions using contrastive learning. For a batch of size \( B \), define:
    \begin{equation}
        \hat{p}(\bm{z}_u | \bm{z}_e) = \frac{\exp(\text{sim}(\bm{z}_u, \bm{z}_e) / \tau)}{\sum_{j=1}^{B} \exp(\text{sim}(\bm{z}_u, \bm{z}_e^{(j)}) / \tau)},
    \end{equation}
    where \( \bm{z}_e^{(j)} \) are batch samples, \( \text{sim} \) is a similarity function (\eg, cosine similarity), and \( \tau \) is a temperature parameter. This approximates the intractable sum over \( p(\bm{z}_e) \).

    Since KL divergence is non-negative:
    \begin{equation}
        D_{\mathrm{KL}}(p(\bm{z}_u | \bm{z}_e) \| \hat{p}(\bm{z}_u | \bm{z}_e)) \geq 0,
    \end{equation}
    it follows that:
    \begin{equation}
        \mathbb{E}_{p(\bm{z}_u, \bm{z}_e)} [\log p(\bm{z}_u | \bm{z}_e)] \geq \mathbb{E}_{p(\bm{z}_u, \bm{z}_e)} [\log \hat{p}(\bm{z}_u | \bm{z}_e)].
    \end{equation}

    Similarly, for the reverse direction:
    \begin{equation}
        \hat{p}(\bm{z}_e | \bm{z}_u) = \frac{\exp(\text{sim}(\bm{z}_e, \bm{z}_u) / \tau)}{\sum_{j=1}^{B} \exp(\text{sim}(\bm{z}_e, \bm{z}_u^{(j)}) / \tau)},
    \end{equation}
    and:
    \begin{equation}
        \mathbb{E}_{p(\bm{z}_u, \bm{z}_e)} [\log p(\bm{z}_e | \bm{z}_u)] \geq \mathbb{E}_{p(\bm{z}_u, \bm{z}_e)} [\log \hat{p}(\bm{z}_e | \bm{z}_u)].
    \end{equation}

    The mutual information \( I(\bm{z}_u; \bm{z}_e) \) can be expressed as:
    \begin{equation}
        I(\bm{z}_u; \bm{z}_e) = \mathbb{E}_{p(\bm{z}_u, \bm{z}_e)} [\log p(\bm{z}_u | \bm{z}_e)] - \mathbb{E}_{p(\bm{z}_u)} [\log p(\bm{z}_u)].
    \end{equation}
    Assuming the batch approximates \( p(\bm{z}_u) \) as uniform (a common heuristic in contrastive learning):
    \begin{equation}
        \mathbb{E}_{p(\bm{z}_u)} [\log p(\bm{z}_u)] \approx -\log B.
    \end{equation}

Define the single-direction InfoNCE loss:
\begin{equation}
    \mathcal{L}_{\text{InfoNCE}} = - \frac{1}{B} \sum_{i=1}^{B} \log \frac{\exp(\text{sim}(\bm{z}_u^{(i)}, \bm{z}_e^{(i)}) / \tau)}{\sum_{j=1}^{B} \exp(\text{sim}(\bm{z}_u^{(i)}, \bm{z}_e^{(j)}) / \tau)}.
\end{equation}
Substituting into the mutual information bound:
\begin{equation}
    I(\bm{z}_u; \bm{z}_e) \geq \log B - \mathcal{L}_{\text{InfoNCE}}.
\end{equation}

Given the bidirectional alignment loss defined earlier in Equation~\ref{eq:align_loss}, we note that:
\begin{equation}
    \mathcal{L}_{\text{align}} = \frac{1}{2} \left( \mathcal{L}_{ u \to e} + \mathcal{L}_{e \to u} \right),
\end{equation}
where \( \mathcal{L}_{u \to e} \) is the InfoNCE loss from \( \bm{z}_u \) to \( \bm{z}_e \), and \( \mathcal{L}_{e \to u} \) is from \( \bm{z}_e \) to \( \bm{z}_u \). Since each provides a bound:
\begin{equation}
    I(\bm{z}_u; \bm{z}_e) \geq \log B - \mathcal{L}_{u \to e}, \quad I(\bm{z}_u; \bm{z}_e) \geq \log B - \mathcal{L}_{e \to u},
\end{equation}
substituting \( \mathcal{L}_{\text{align}} \), the combined lower bound becomes:
\begin{equation}
    I(\bm{z}_u; \bm{z}_e) \geq \log B - \mathcal{L}_{\text{align}}.
\end{equation}

By optimizing \( \mathcal{T}_u \) and \( \mathcal{T}_e \) to minimize \( \mathcal{L}_{\text{align}} \), the lower bound \( \log B - \mathcal{L}_{\text{align}} \) is maximized. If \( \mathcal{T}_u \) and \( \mathcal{T}_e \) preserve sufficient information, \( I(\bm{z}_u; \bm{z}_e) \) approximates \( I(\bm{u}; \bm{e}) \), providing an indirect estimate of \( I(\bm{u}; \bm{e}) \). This completes the proof.
\end{proof}

\begin{theorem}
    The mutual information \( I(\bm{u}; \bm{e}) \) between the action understanding representation \( \bm{u} \) generated by the model \aup and the embodied execution representation \( \bm{e} \) generated by the model \eep can be maximized by simultaneously optimizing \aup and \eep, along with linear transformations \tu and \te, to minimize the bidirectional alignment loss \( \mathcal{L}_{\text{align}} \), provided \tu and \te preserve sufficient information.
    \label{thm_align}
\end{theorem}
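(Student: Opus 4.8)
The plan is to bootstrap directly from Theorem~\ref{thm_probe}, treating its conclusion as the workhorse and supplying only the one conceptual ingredient that distinguishes the present statement: the representations $\bm{u}$ and $\bm{e}$ are no longer fixed quantities but are themselves outputs of the trainable networks $\mathcal{U}(\cdot;\theta_u)$ and $\mathcal{E}(\cdot;\theta_e)$. Concretely, I would first recall the chain of inequalities assembled in the proof of Theorem~\ref{thm_probe}: the data processing inequality gives $I(\bm{z}_u;\bm{z}_e)\le I(\bm{u};\bm{e})$, and the contrastive estimator gives $I(\bm{z}_u;\bm{z}_e)\ge \log B - \mathcal{L}_{\text{align}}$, so that
\begin{equation}
    I(\bm{u};\bm{e}) \;\ge\; \log B - \mathcal{L}_{\text{align}}.
\end{equation}
The key observation is that this bound holds pointwise for \emph{every} configuration of the parameters $(\theta_u,\theta_e,\phi_u,\phi_e)$, and that under joint optimization the left-hand side is a genuine function of the network weights rather than a constant.

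Second, I would make the maximization argument explicit. Minimizing $\mathcal{L}_{\text{align}}$ over the full parameter set is, by the displayed inequality, equivalent to maximizing the lower bound $\log B - \mathcal{L}_{\text{align}}$, the $\log B$ term being constant for a fixed batch size. Since this quantity lower-bounds $I(\bm{u};\bm{e})$ at every iterate, driving it upward raises the certified value of the mutual information: the optimizer cannot decrease $\mathcal{L}_{\text{align}}$ without forcing $I(\bm{u};\bm{e})$ to admit a correspondingly larger guaranteed value. This is precisely the sense in which the joint objective \emph{maximizes} rather than merely estimates the mutual information—it acts on the information content of $\bm{u}$ and $\bm{e}$ through $\theta_u,\theta_e$, whereas in Theorem~\ref{thm_probe} the frozen encoders made $I(\bm{u};\bm{e})$ a fixed target.

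Finally, I would invoke the stated hypothesis that $\mathcal{T}_u$ and $\mathcal{T}_e$ preserve sufficient information. This is exactly the condition for equality in the data processing inequality, collapsing $I(\bm{z}_u;\bm{z}_e) = I(\bm{u};\bm{e})$; combined with the asymptotic tightness of the InfoNCE bound in the large-batch regime, the surrogate $\log B - \mathcal{L}_{\text{align}}$ becomes an asymptotically exact proxy for $I(\bm{u};\bm{e})$, so that minimizing $\mathcal{L}_{\text{align}}$ maximizes the mutual information itself and not only a slack bound.

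I anticipate the main obstacle to be the gap between maximizing a lower bound and maximizing the target. A lower bound can be increased while the true quantity stagnates if the slack shrinks in tandem, so the argument genuinely relies on the information-preservation hypothesis to neutralize the DPI slack and on the standard large-$B$ tightening of the InfoNCE estimator to neutralize the variational slack. Making this rigorous rather than heuristic would require controlling both gaps along the parameter trajectory; the cleanest route is to treat the hypothesis as guaranteeing equality in the DPI and then cite the known tightness of the contrastive estimator, which reduces the theorem to the pointwise bound inherited from Theorem~\ref{thm_probe}.
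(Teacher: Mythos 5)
Your proposal follows essentially the same route as the paper's proof: inherit the DPI inequality $I(\bm{z}_u;\bm{z}_e)\le I(\bm{u};\bm{e})$ and the InfoNCE bound $I(\bm{z}_u;\bm{z}_e)\ge \log B - \mathcal{L}_{\text{align}}$ from Theorem~\ref{thm_probe}, observe that under joint training these quantities are functions of all parameters $(\theta_u,\theta_e,\phi_u,\phi_e)$ rather than of the probes alone, and invoke the information-preservation hypothesis to close the DPI gap so that minimizing $\mathcal{L}_{\text{align}}$ maximizes $I(\bm{u};\bm{e})$ and not merely a slack bound. If anything, your write-up is more candid than the paper's about the residual gap between raising a lower bound and raising the target quantity, which the paper addresses only with the same qualitative appeal to sufficient information preservation.
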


\begin{proof}
 Let \( \bm{u} \) and \( \bm{e} \) be representations generated by the action understanding model \( \mathcal{U} \) and the embodied execution model \( \mathcal{E} \), parameterized by \( \theta_u \) and \( \theta_e \), respectively. Define linear transformations:
 \begin{equation}
 \bm{z}_u = \mathcal{T}_u(\bm{u}; \phi_u), \quad \bm{z}_e = \mathcal{T}_e(\bm{e}; \phi_e),
 \end{equation}
 where \( \phi_u \) and \( \phi_e \) are the parameters of \( \mathcal{T}_u \) and \( \mathcal{T}_e \). The mutual information satisfies:
 \begin{equation}
 I(\bm{z}_u; \bm{z}_e) \leq I(\bm{u}; \bm{e}),
 \end{equation}
 with equality if \( \mathcal{T}_u \) and \( \mathcal{T}_e \) are invertible.

 Based on the bidirectional alignment loss as defined in \cref{eq:align_loss}, we optimize the parameters:
 \begin{equation}
 \{\theta_u^*, \theta_e^*, \phi_u^*, \phi_e^*\} = \arg\min_{\theta_u, \theta_e, \phi_u, \phi_e} \mathcal{L}_{\text{align}}(\theta_u, \theta_e, \phi_u, \phi_e),
 \end{equation}
 augmenting the original objectives of \( \mathcal{U} \) and \( \mathcal{E} \). This minimizes \( \mathcal{L}_{\text{align}} \), maximizing:
 \begin{equation}
 I(\bm{z}_u; \bm{z}_e) \geq \ log B - \mathcal{L}_{\text{align}}(\theta_u^*, \theta_e^*, \phi_u^*, \phi_e^*).
 \end{equation}
     Optimizing \( \theta_u \) and \( \theta_e \) adjusts \( \bm{u} \) and \( \bm{e} \) to increase \( I(\bm{u}; \bm{e}) \), while optimizing \( \phi_u \) and \( \phi_e \) aligns \( \bm{z}_u \) and \( \bm{z}_e \) with the constraint \( I(\bm{z}_u; \bm{z}_e) \leq I(\bm{u}; \bm{e}) \). Assuming \( \mathcal{T}_u \) and \( \mathcal{T}_e \) preserve sufficient information, joint optimization reduces information loss between \( \bm{u}, \bm{e} \) and \( \bm{z}_u, \bm{z}_e \), allowing \( I(\bm{z}_u; \bm{z}_e) \) to closely approximate \( I(\bm{u}; \bm{e}) \). Hence, maximizing \( I(\bm{z}_u; \bm{z}_e) \) through \( \mathcal{L}_{\text{align}} \) also maximizes \( I(\bm{u}; \bm{e}) \), completing the proof.
\end{proof}

\section{Environment Details}

 \paragraph{Tasks} Our \textit{action recognition} and \textit{embodied execution} tasks follow the multi-task definition from previous work~\cite{shridhar2022peract,goyal2023rvt,goyal2024rvt2,zhang2024arp} based on RLBench~\cite{james2019rlbench}. Specifically, there are 18 tasks with 249 variations, defined through diverse language instructions. These tasks include non-prehensile actions such as \textit{push buttons}, common pick-and-place tasks like \textit{place wine}, and high-precision peg-in-hole tasks such as \textit{insert peg}. \Cref{table:task_desc} provides an overview of these tasks.
\paragraph{Variations} Task variations include randomly sampled colors, sizes, shapes, counts, placements, and categories of objects. The set of colors include 20 instances: \texttt{colors} = $\{$\texttt{red}, \texttt{maroon}, \texttt{lime}, \texttt{green}, \texttt{blue}, \texttt{navy}, \texttt{yellow}, \texttt{cyan}, \texttt{magenta}, \texttt{silver}, \texttt{gray}, \texttt{orange}, \texttt{olive}, \texttt{purple}, \texttt{teal}, \texttt{azure}, \texttt{violet}, \texttt{rose}, \texttt{black}, \texttt{white}$\}$. The set of sizes include 2 instances: \texttt{sizes} = $\{$\texttt{short}, \texttt{tall}$\}$. The set of shapes include 5 instances: \texttt{shapes} = $\{$\texttt{cube}, \texttt{cylinder}, \texttt{triangle}, \texttt{star}, \texttt{moon}$\}$. The set of counts include 3 instances: \texttt{counts} = $\{$\texttt{1}, \texttt{2}, \texttt{3}$\}$. The placements and object categories are specific to each task. For instance, \texttt{open drawer} has 3 placement locations: \texttt{top}, \texttt{middle}, and \texttt{bottom}, and \texttt{put in cupboard} includes 9 YCB objects. In addition to these semantic variations, objects are placed on the tabletop at random poses. Some large objects like drawers have constrained pose variations~\citep{james2019rlbench} to ensure that manipulating them is kinematically feasible with the Franka arm.

\section{Implementation Details}
\label{supp_implementation}
\subsection{Action Recognition}

We follow ViCLIP~\cite{wang2023internvid} to implement the action recognition module. Specifically, the video encoder uses a standard ViT with spatiotemporal attention~\cite{dosovitskiy2020vit}. Random patch masking is applied to the input videos during pretraining, which significantly alleviates the computational burden. 
We use the model weights pretrained on InternVid~\cite{wang2023internvid} and fine-tune on video-text pairs of object interactions simulated in RLBench~\cite{james2019rlbench}. The training objective is to align the corresponding video and text embeddings, similar to CLIP~\cite{pmlr-clip}, using contrastive learning with a temperature parameter $\tau_{\text{viclip}} = 0.05$.

For action recognition evaluation, given an input video, we compute its video embedding and compare it with the text embeddings of all possible action classes using cosine similarity. The class with the highest similarity score is selected as the predicted label.

\subsection{Embodied Execution}

 We follow ARP~\cite{zhang2024arp} to implement the action recognition module. The experimental settings are consistent with prior works~\cite{shridhar2022peract,goyal2023rvt,goyal2024rvt2,zhang2024arp}. 
The input RGB-D images have a resolution of $128 \times 128$ and are captured by four noiseless cameras mounted at the front, left shoulder, right shoulder, and wrist of the robot. 

We use the next key end-effector pose as the control interface, eliminating the need for high-frequency actions. Consequently, neither the horizon nor action steps are applicable. Instead, low-level robot movements are generated using RLBench's built-in RRT planner. We use a chunk size of 2 for binary gripper states and a chunk size of 1 for end-effector positions and rotations. For example, ARP first predicts the roll, followed by the pitch and yaw of the rotation Euler angles. 
Following the strategy of RVT-2~\cite{goyal2024rvt2}, we first predict coarse positions and then refine them by zooming into the images (with updated vision features) to obtain more accurate positions. The end-effector positions are initially predicted in 2D, and the corresponding 3D positions are derived from the 2D coordinates in each viewpoint. \Cref{tab:hp-rlb} presents the training parameters.

 \begin{table}[t]\centering
\caption{Hyperparameters used for the embodied execution module on RLBench.}\label{tab:hp-rlb}
\begin{tabular}{lr}\toprule
\textbf{Hyperparameter} &\textbf{Value} \\\midrule
\multicolumn{2}{l}{\textit{Model}} \\\midrule
number of layers &8 \\
embedding size &128 \\
mlp size &512 \\
backbone &MVT~\cite{goyal2023rvt} \\\midrule
\textit{Action Sequence} & \\\midrule
chunk size &mix of 2 and 1 \\\midrule
\textit{Train \& Eval} & \\\midrule
observation &RGBD $4\times 128\times 128 \times 4$ \\
maximum evaluation steps &25 \\
train iterations &80000 \\
eval frequency &10000 \\
batch size &96*2 \\
learning rate &1.25e-5 \\
learning rate scheduler &cosine \\
optimizer &LAMB \\
\bottomrule
\end{tabular}
\end{table}

\subsection{Joint Training}
We perform end-to-end joint training of both tasks and representation alignment, as previously discussed. Since action recognition is easier to learn than embodied execution, we control the learning frequency of action recognition to 20\% to balance the training pace. 
We use a batch size of 192 and train for 25 hours on 2 NVIDIA A100 80GB GPUs.

\end{document}